\newtheorem{theorem}{Theorem}
\newtheorem{definition}{Definition}
\newtheorem{lemma}{Lemma}
\newcommand{\R}{{\rm I\!R}}
\newcommand{\Prob}{{\rm I\!P}}
\newcommand{\Exp}{{\mathop{\rm I\!E}}}
\newcommand{\one}[1]{{\bm{1}\left\{#1\right\}}}
\DeclarePairedDelimiter\floor{\lfloor}{\rfloor}
\title{How to Control the Error Rates of Binary Classifiers}
\author{%
  Milo{\v s} Simi{\' c}\thanks{milos.simic.ms@afrodita.rcub.bg.ac.rs} \\
  University of Belgrade\\
  Studentski trg 1, 11000 Belgrade\\
  \texttt{milos.simic.csci@gmail.com} \\
}
\begin{document}

\maketitle

\begin{abstract}
  The traditional binary classification framework constructs classifiers which may have good accuracy, but whose false positive and false negative error rates are not under users' control. In many cases, one of the errors is more severe and only the classifiers with the corresponding rate lower than the predefined threshold are acceptable. In this study, we combine binary classification with statistical hypothesis testing to control the target error rate of already trained classifiers. In particular, we show how to turn binary classifiers into statistical tests, calculate the classification $p$-values, and use them to limit the target error rate.
\end{abstract}
\section{Introduction}\label{sec:introduction}
A binary classifier, trained to distinguish between positive and negative classes, can produce two errors: a false positive and a false negative. In some cases, one of the errors is more severe and only the classifiers with the corresponding error rate lower than the predefined threshold are acceptable \cite{Turney1995}. For example, in medicine, a false negative means that a condition is missed and not treated, which could be fatal to the patient in question, whereas a false positive induces stress, but does not have such severe consequences \cite{Tong2020}. 

In this study, we combine statistical hypothesis testing with binary classifiers and, in doing so, formulate a new classification framework capable of controlling both false positive and false negative error rates. More specifically, for each object that is to be classified, we show how to calculate the $p$-values associated with its membership in both positive and negative classes. We prove that classification $p$-values can be used to bound the error rate of our choice by the desired threshold. We also give probabilistic interpretations of the $p$-values and demonstrate usefulness of the proposed framework by applying it to the neural network for classifying distributions as normal or non-normal 
\cite{Simic2020} and deriving a new statistical test of normality from it.

The rest of the paper is organized as follows. Section \ref{sec:related_work} contains literature review. We develop the theory of our framework in Section \ref{sec:classification_tests}. The conducted experiments and obtained results are presented in Section \ref{sec:experiments}. Section \ref{sec:discussion_and_conclusion} contains discussion and draws conclusions. Proofs are in the appendix.

We use the following notation and conventions throughout the paper. Let $\mathcal{X}$ be the space of the objects we want to classify. Let $X$ be a random variable modeling a random object from $\mathcal{X}$, and let $x$ be a concrete object. Let $Y=y(X)$ and $y(x)$ denote the class to which $X$ and $x$ belong, where $0$ represents the negative, and $1$ is the label for the positive class. A classifier is a mapping $\hat{y}:\mathcal{X}\rightarrow\{0, 1\}$ defined as $\one{\tau(x) \geq t^*}$, where $\tau:\mathcal{X}\rightarrow \R$ is a scoring function that assigns higher scores to the objects for which the classifier is more confident that they belong to class $1$, $t^*$ is a decision threshold, and $\one{\cdot}$ is the indicator function. Note that for a random object $X$, it holds that $T=\tau(X)$, $Y=y(X)$, and $\hat{Y}=\hat{y}(X)$ are also random variables. 

Let $\Prob$, $\Exp$, and $V$ denote generic probability, expectation, and variance, respectively. The false positive ($FPR$) and false negative ($FNR$) rates of classifier $\hat{y}$ that uses $t^*$ as its decision threshold are defined as follows:
\begin{equation}
FNR(\hat{y},t^*) = \Prob\left(\hat{Y}=0\mid Y=1\right)\quad FPR(\hat{y},t^*) =\Prob\left(\hat{Y}=1\mid Y=0\right)
\end{equation}
We drop $\hat{y}$ or $t^*$ or both from $FNR(\hat{y},t^*)$ and $FPR(\hat{y}, t^*)$ to simplify notation when the classifier, the threshold or both are fixed or not relevant for exposition.

\section{Related Work}\label{sec:related_work}
The \textbf{cost-sensitive classification (CSC)} framework \cite{Habbema1974,Turney1995,Tapkan2016,Domingos1999,Elkan2001,Zadrozny2003,Zadrozny2001learning} has been the first to recognize the asymmetry between $FPR$ and $FNR$. CSC assigns different costs to the two errors. The direct CSC approach formulates new algorithms that take the costs into account from the start, e.g. \cite{Turney1995,Drummond2000,Ling2004,Tapkan2016}. 
The wrapper CSC approach turns cost-insensitive algorithms to cost-sensitive by manipulating the data before training \cite{Elkan2001,Zadrozny2003,Domingos1999}, or processing the outputs of classifiers whose training has been completed \cite{Witten2002,Chai2004,Sheng2006}. We refer readers to \cite{Sammut2010,Sheng2006} for a more detailed exposition of CSC. Both the direct and wrapper approaches have the same goal to minimize the total error cost. 

In many cases, the goal is not to minimize the total misclassification cost, but to limit the probability of the more severe error. Without loss of generality, assume that the false negative is the more serious error. The \textbf{Neyman-Pearson classification (NPC)} framework \cite{Tong2016} considers only the classifiers with $FNR \leq \alpha$, where $\alpha\in(0,1)$ is the predefined upper bound of $FNR$, and among those, aims to find the one with the minimal $FPR$. The empirical approach \cite{Cannon2002,Scott2005,Scott2007,ScottBellala2007,Han2008,Han2009,Rigollet2011,Barlaud2015} in NPC does so by solving the following optimization problem during training:
\begin{equation}
	\min_{\hat{y} \text{ s.t. } \widehat{FNR}(\hat{y})\leq \alpha + \varepsilon/2} \widehat{FPR}(\hat{y})
\end{equation}
where $\widehat{FNR}$ and $\widehat{FPR}$ are the empirical estimates of $FNR$ and $FPR$, calculated using the training data, and $\varepsilon>0$. The plug-in NPC approach \cite{Tong2013} directly applies the Neyman-Pearson Lemma \cite{Lehmann2006}. Let $q_0$ and $q_1$ represent the class-conditional densities over $\mathcal{X}$. The Neyman-Pearson Lemma states that the classifier with the minimal $FPR$ under the constraint that $FNR \leq \alpha$ is as follows:
\begin{equation}\label{eq:np_lemma}
	\hat{y}^*_{NP}(x) = \one{\frac{q_1(x)}{q_0(x)} \geq C_{\alpha}}\qquad \text{where } \Prob\left(\frac{q_1(X)}{q_0(X)} \geq C_{\alpha} \mid Y=0\right)=\alpha
\end{equation}
As $q_0$ and $q_1$ are not known, their estimates from the data, $\hat{q}_0$ and $\hat{q}_1$, are used instead in Equation \eqref{eq:np_lemma}, with $C_{\alpha}$ also estimated from the data. The original plug-in approach \cite{Tong2013} suffers from the curse of dimensionality as the densities are hard to approximate in high-dimensional spaces. \citet{Zhao2016} propose to do feature selection prior to density estimation, but their approach assumes that the features are independent from one another, which does not hold in general. Under the assumption that $q_0$ and $q_1$ are multivariate Gaussian densities with a common covariance matrix, \citet{Tong2020} formulate parametric Neyman-Pearson classifiers. Closely related to the NPC framework is the wrapper algorithm proposed by \citet{Tong2018}. The algorithm splits the original data into multiple training and test sets. Then, it trains a base classifier on each training set, and uses the test set to find the decision threshold closest to the value for which the target error rate is not greater than $\alpha$. An ensemble of such classifiers is proven to control the target rate with high probability, but comes with no guarantees that its non-target error rate is optimal \cite{Tong2018}. The wrapper can be used with any classification algorithm that has a scoring function.

\textbf{Cross-validated $p$-values} for multiclass classification were first formulated by \citet{Dumbgen2008}. We present them here for the case of binary classification. For an $x\in\mathcal{X}$, the $p$-value of hypothesis $H_c: y(x)=c$ ($c\in\{0,1\}$), denoted as $p_c(x)$, is defined as follows. Let $\mathcal{D}=\{(x_i,y_i)\}_{i=1}^m (y_i=y(x_i))$ be the training data and let $\mathcal{D}_i(x)$ denote the modified set $\mathcal{D}$ in which $x_i$ has been replaced with $x$. For each $i$ such that $y_i=c$, a classifier is trained on $\mathcal{D}_i(x)$. Let $m_c$ be the number of those new classifiers. For each such classifier, its score for $x_i$, denoted as $t_i(x_i)$, is recorded. Let $\tau(x)$ be the score that the original classifier, trained on the original set $\mathcal{D}$, outputs for $x$. The $p_c(x)$ is then a simple proportion:
\begin{equation}
	p_c(x) = \frac{1}{m_c+1}\left(1+\left\lvert\left\{i \in \{1,2,\ldots.m\}\,\middle|\, y_i=c \text{ and } \tau_i(x_i) \geq \tau(x) \right\}\right\rvert\right)
\end{equation}
This approach requires multiple classifiers to be trained whenever a new object needs to be classified. \citet{Dumbgen2008} provide a computationally less complex definition that requires training one new classifier on $\mathcal{D}\cup\{(x,c)\}$ for testing $H_c$. If $\tau$ is its scoring function, this $p_c$-value is:
\begin{equation}
p_c(x) = \frac{1}{m_c+1}\left(1+\left\lvert\left\{i \in \{1,2,\ldots.m\}\,\middle|\, y_i=c \text{ and } \tau(x_i) \geq \tau(x) \right\}\right\rvert\right)
\end{equation}
Although only $2$ additional classifiers are trained for each new object in the latter approach, it is still computationally very expensive. Both definitions are capable of controlling the target error rate and do not depend on the prior probabilities $\Prob(Y=0)$ and $\Prob(Y=1)$, but only on the available data.

\textbf{Typicality indices (typicality $p$-values)} \citet{McLachlan1992} introduce an alternative definition of $p$-values. Let $q_0$ and $q_1$ be the class-conditional densities as in the NPC framework. The typicality index of $x$ for class $c$ is:
\begin{equation}
p_c(x) = \Prob\left(q_c(X) \leq q_c(x) \mid y(X)=c\right)
\end{equation}
Although originally defined for the case where $q_0$ and $q_1$ are Gaussian, the typicality $p$-values can control the error rates no matter the exact forms of class-conditional distributions. However, just as the NPC framework, they too require density estimation, which is computationally challenging in high-dimensional settings.

The \textbf{statistical-test-as-a-proxy classification (STPC)} framework classifies objects by conducting classical two-sample statistical tests \cite{Liao2007,Ghimire2012,Guo2019,He2019}. Let $\mathcal{S}_0$ and $\mathcal{S}_1$ be the sets of scores for negative and positive training (or test) objects, respectively. If $x$ is positive, than the difference between $\mathcal{S}_0\cup\{\tau(x)\}$ and $\mathcal{S}_1$ should be smaller than that between $\mathcal{S}_0$ and $\mathcal{S}_1\cup\{\tau(x)\}$. The differences are quantified by the $p$-values of the tests that check whether the two sets come from the same distribution \cite{Liao2007,Ghimire2012}, or the tests which compare the distances between and inside those pairs of sets \cite{He2019,Guo2019}. The problem with this framework is that the $p$-values are small whenever datasets compared are large, in which case this approach may not be very useful.

\section{Deriving Tests from Binary Classifiers}\label{sec:classification_tests}
Similarities between classification and hypothesis testing are not hard to see. The null and alternative hypotheses in statistics correspond to $H_0: y(x)=0$ and $H_1: y(x) = 1$ in binary classification. The sample on which we test the hypotheses in statistics corresponds to the object that we want to classify in machine learning.  However, what is the classification equivalent of the test statistic?

Our idea is to treat the score as a random variable $T=\tau(X)$ and use it as the test statistic for which we can define the $p$-values. In standard statistical tests, the $p$-value both quantifies statistical uncertainty and controls the rate at which the test rejects a true null.
\begin{definition}[\cite{Abell1999,Biau2009}]\label{def:p_vrednost_verovatnoca_ekstremnijeg}
Let $T$ be the test statistic and $x$ a sample. Let higher values of $T(x)$ be more incompatible with the null hypothesis $H_0: \theta = \theta_0$, where $\theta$ is the parameter being tested and $\theta_0$ its hypothesized value. Let $X$ denote a random sample.  Then, the $p$-value of $x$ is:
\begin{equation}\label{eq:pv_eks}
p(x) = \Prob\left(T(X) \geq T(x) \mid H_0\right)
\end{equation}
\end{definition}
The $p$-values for classification can be defined analogously if we use $T=\tau(X)$ as the test statistic and assume that its higher values are interpreted as more compatible with $H_1$, and lower with $H_0$, which is the case with many classification algorithms, e.g. support vector machines.
\begin{definition}[Classification $p$-values]\label{def:classification_p_values} Let $T$ be the scoring statistic of the classifier whose scoring function is $\tau$. Let $x$ be a new object to classify. The classification $p$-value for hypothesis $H_0$, referred to as the $p_0$-value henceforth, is defined as follows:
		\begin{equation}
		p_0(x) = P_0\left(T \geq \tau(x) \mid Y = 0\right)
		\end{equation}
The classification $p$-value for hypothesis $H_1$, referred to as the $p_1$-value hereafter, is defined as follows:
		\begin{equation}
		p_1(x) = P_1\left(T \leq \tau(x) \mid Y = 1\right)
		\end{equation}
\end{definition}
The $p_c$-values follow the uniform distribution $U[0,1]$.
\begin{theorem}\label{th:p_vrednost_je_uniformna}
	It holds that $p_c(X) \sim U[0, 1]$ when $y(X)=c$ ($c\in\{0,1\}$). Therefore:
	\begin{equation}
	\Prob\left(p_c(X) \leq \alpha \mid y(X)=c\right) = \alpha\quad \text{for each }\alpha\in(0, 1)
	\end{equation}
\end{theorem}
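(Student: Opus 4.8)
The plan is to recognize both $p_0$ and $p_1$ as instances of the probability integral transform applied to the class-conditional distribution of the score $T=\tau(X)$, so that the theorem reduces to the classical fact that pushing a continuous random variable through its own distribution function produces a $U[0,1]$ variable. The only structural input needed beyond Definition~\ref{def:classification_p_values} is that, conditionally on each class, $T$ has no atoms; this holds for the score functions of the classifiers we consider.

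First I would treat $c=0$. Writing $G_0(t)=\Prob\left(T\geq t\mid Y=0\right)$ for the conditional survival function of the score, Definition~\ref{def:classification_p_values} gives $p_0(X)=G_0(T)$ on the event $\{Y=0\}$. The map $G_0$ is non-increasing and, under atomlessness of $T\mid Y=0$, continuous with range $[0,1]$. Fix $\alpha\in(0,1)$ and set $g_\alpha=\inf\{t:G_0(t)\leq\alpha\}$; monotonicity and continuity of $G_0$ give $\{G_0(T)\leq\alpha\}=\{T\geq g_\alpha\}$ and $G_0(g_\alpha)=\alpha$, so that
\begin{equation}
\Prob\left(p_0(X)\leq\alpha\mid Y=0\right)=\Prob\left(T\geq g_\alpha\mid Y=0\right)=G_0(g_\alpha)=\alpha .
\end{equation}
Since this holds for every $\alpha\in(0,1)$, the conditional CDF of $p_0(X)$ is the identity on $(0,1)$, i.e. $p_0(X)\sim U[0,1]$ given $Y=0$. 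The case $c=1$ is the mirror image: with $F_1(t)=\Prob\left(T\leq t\mid Y=1\right)$ one has $p_1(X)=F_1(T)$ on $\{Y=1\}$, and the same generalized-inverse manipulation (or simply replacing $\tau$ by $-\tau$ and swapping the classes) yields $\Prob\left(p_1(X)\leq\alpha\mid Y=1\right)=\alpha$. The displayed identity in the theorem is then immediate, because $\Prob(U\leq\alpha)=\alpha$ for $U\sim U[0,1]$.

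I expect the continuity (atomlessness) of the class-conditional score distributions to be the main, and essentially only, subtlety. If $T\mid Y=c$ had atoms, the generalized-inverse identity above would become an inclusion and the argument would give only the super-uniformity bound $\Prob\left(p_c(X)\leq\alpha\mid y(X)=c\right)\leq\alpha$ rather than equality; recovering exact uniformity would then require either the continuity assumption adopted here or an explicit randomization of the $p$-value. Everything else in the proof is a routine application of the probability integral transform.
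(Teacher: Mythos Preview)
Your proposal is correct and follows essentially the same approach as the paper: both arguments recognize $p_1(X)$ as the class-conditional CDF of $T$ and $p_0(X)$ as (one minus) the class-conditional CDF, then invoke the probability integral transform. Your treatment is in fact more careful, since you make explicit the atomlessness assumption on $T\mid Y=c$ that is needed for exact uniformity (as opposed to mere super-uniformity), a point the paper's proof leaves implicit.
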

This means that classifying $x$ as a negative object when $p_1(x)\leq \alpha$ results in $FNR$ being exactly $\alpha$. If our goal is to control $FPR$, we classify $x$ as positive when $p_0(x) \leq \alpha$. 
\paragraph{Probabilistic interpretation} The $p_c$-values belong to frequentist statistics and should be interpreted in line with statistical frequentism. The value of $p_1(x)$ should be understood as the long-term frequency at which the classifier $\one{\tau(\cdot) > \tau(x)}$, equivalent to $\one{p_1(\cdot) > p_1(x)}$, gives false positives. The philosophical justification for accepting $H_0$ when $p_1(x) \leq \alpha$ is that we would rarely be wrong in doing so \cite{Mayo2006} when $\alpha$ is small. In statistics, the most common values for $\alpha$ are $0.01$ and $0.05$, but $\alpha$ should be chosen depending on the application requirements. The same conclusions hold for $p_0(x)$. Conversely, the value of $1-\alpha$ is the long-term frequency of correctly classifying positive (or negative) objects as such using $\one{p_1(\cdot) > \alpha }$ (or $\one{p_0(\cdot) > p_1(x)}$).

Calculating the $p_c$-values and classifying objects by comparing those values to $\alpha$ recalibrates the original classifier. If $t$ is its decision threshold, $a(t)$ its accuracy, $w_1$ the prior probability $\Prob(y(X)=1)$, and $w_0=\Prob(y(X)=0)$, it is straightforward to see the following connection between $FPR(t)$, $FNR(t)$, and $a(t)$:
\begin{equation}
w_1 FNR(t) + w_0 FPR(t) + a(t) = 1
\end{equation}
Decreasing one of the rates to $\alpha$, e.g. $FNR$, essentially changes the threshold $t$ to some new value $t'$. The decrease in $FNR$ is compensated by distributing the difference $w_1(FNR(t)-\alpha)$ between $w_0 FPR(t')$ and $a(t')$. Likewise, increasing $FNR$ to $\alpha$ would cause $FPR$ and $a$ to shrink. If we plot $FNR$ against $FPR$, we will get the ROC--curve rotated around the line $TPR=1/2$ ($TPR$, true positive rate, equal to $1-FNR$). The points where the rotated ROC-curve intersects the line $FPR\mapsto(1-a)/w_0 - (w_1/w_0)FPR$ for a fixed accuracy $a\in[0,1]$ identifies the threshold scores which have the same accuracy, but different trade-offs between $FNR$ and $FPR$.

\subsection{Estimation of the $p_c$-values}
We propose to use the following non-parametric estimators of the $p_c$-values.
\begin{definition}\label{def:aproksimatori_klasifikacione_p_vrednosti}
Let $\mathcal{Z}=\left\{(X_i,Y_i)\right\}_{i=1}^{n}$ be a random set of objects which were not used during training the underlying classifier. Let $\mathcal{Z}_c = \{X_i^{(c)}\}_{i=1}^{n_c} = \left\{ X_i \mid i \in \{1,2,\ldots,n\}, Y_i = c \right\}$ ($c=0,1$). Let $x$ be a new object to classify. We use the following estimator for $p_0(x)$:
\begin{equation}\label{eq:aproksimator_p0}
\hat{p}_{0, n_0}(\mathbf{x}) = \frac{1}{n_0} \sum_{i=1}^{n_0}\bm{1}\left\{T(X_i^{(0)}) \geq T(\mathbf{x})\right\}
\end{equation}
and the following for $p_1(x)$:
\begin{equation}\label{eq:aproksimator_p1}
\hat{p}_{0, n_1}(\mathbf{x}) = \frac{1}{n_1} \sum_{i=1}^{n_1}\bm{1}\left\{T(X_i^{(1)}) \leq T(\mathbf{x})\right\}
\end{equation}
If $n_c$ is clear from the context, we will omit it to simplify notation. 
\end{definition}
It is important that $\mathcal{Z}_c$ does not contain objects that were used during training, so that their scores and the score of any random object that is to be classified are independent and identically distributed, as it is required for proofs.
 
The estimators \eqref{eq:aproksimator_p0} and \eqref{eq:aproksimator_p1} actually represent the estimators of the CDFs of $T(X) \mid y(X)=0$ and $1-T(X) \mid y(X)=1$. Theorem \ref{th:dobra_svojstva_aproksimatora} presents their properties.
\begin{theorem}\label{th:dobra_svojstva_aproksimatora}
Let $\{X_1,X_2,\ldots,X_{n}\}$ be an i.i.d sample of objects from class $c$ that we use to estimate $\hat{p}_{c,n}$. Then, $\hat{p}_{c, n}$ has the following properties:
\begin{enumerate}
		\item $\hat{p}_{c,n}$ is unbiased. \begin{equation}\Exp_{X_1,X_2,\ldots,X_{n}}\left[\hat{p}_{c,n}(\mathbf{x})\right]=p_c(\mathbf{x})\quad (\forall \mathbf{x})(y(\mathbf{x})=c)
		\end{equation}
		\item Its variance is finite, bound from above and drops linearly with $n$.
		\begin{equation}\label{eq:varijabilnost}
		V_{X_1,X_2,\ldots,X_{n}}\left[\hat{p}_{c,n}(x)\right] = \frac{1}{n}p_c(x)(1-p_c(x)) \leq \frac{1}{4n} \quad (\forall x)(y(x)=c)
		\end{equation}
		\item Its convergence to $p_c$ is almost sure.
		\begin{equation}\label{eq:skoro_sigurna_konvergencija}
		\Prob\left(\lim_{n\rightarrow \infty}\hat{p}_{c,n}=p_c\right) = 1
		\end{equation}
		\item It is consistent.
		\begin{equation}\label{eq:konzistentnost}
		\lim_{n\rightarrow\infty}\Prob\left(\lvert\hat{p}_{c,n}-p_c\rvert > \varepsilon \right) = 0\quad (\forall\varepsilon)(\varepsilon > 0)
		\end{equation}
		\item The confidence intervals of any width $\varepsilon>0$ can be obtained if $n$ is sufficiently large.
		\begin{equation}\label{eq:intervali_poverenja}
		\Prob\left(\sup_{x}\lvert\hat{p}_{c,n}(x)-p_c(x)\rvert\leq \varepsilon\right) \geq 1 - 2e^{-2n\varepsilon^2}
		\end{equation}
	\end{enumerate}
\end{theorem}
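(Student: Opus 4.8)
The plan is to observe that, for a fixed object $\mathbf{x}$ with $y(\mathbf{x})=c$, the estimator $\hat{p}_{c,n}(\mathbf{x})$ is a sample mean of $n$ i.i.d.\ Bernoulli random variables, so that properties 1--4 are immediate consequences of standard facts about i.i.d.\ averages, while property 5 is an instance of the Dvoretzky--Kiefer--Wolfowitz (DKW) inequality once $\hat{p}_{c,n}$ is recognized as an empirical distribution (or survival) function on $\R$.

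Concretely, I would set $B_i = \one{T(X_i^{(0)}) \geq T(\mathbf{x})}$ when $c=0$ and $B_i = \one{T(X_i^{(1)}) \leq T(\mathbf{x})}$ when $c=1$, so that $\hat{p}_{c,n}(\mathbf{x}) = \tfrac1n\sum_{i=1}^n B_i$. Since $\mathcal{Z}_c$ is disjoint from the data used to fit $\tau$, the $X_i^{(c)}$ are i.i.d.\ copies of $X \mid Y = c$ independent of $\tau$, hence the $B_i$ are i.i.d.\ Bernoulli with $\Prob(B_1 = 1) = p_c(\mathbf{x})$ by Definition \ref{def:classification_p_values}. Then linearity of expectation gives $\Exp[\hat{p}_{c,n}(\mathbf{x})] = p_c(\mathbf{x})$ (property 1); independence gives $V[\hat{p}_{c,n}(\mathbf{x})] = \tfrac1n p_c(\mathbf{x})(1-p_c(\mathbf{x}))$, and $t(1-t) \leq \tfrac14$ on $[0,1]$ gives the bound $\tfrac{1}{4n}$ (property 2); the strong law of large numbers applied to $(B_i)$ gives $\hat{p}_{c,n}(\mathbf{x}) \to p_c(\mathbf{x})$ almost surely (property 3); and property 4 follows either from property 3, or directly from Chebyshev's inequality combined with the variance bound of property 2.

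For property 5 I would pass through the scalar $T(\mathbf{x})$. Taking $c=1$ first: as $\mathbf{x}$ varies, $\hat{p}_{1,n}(\mathbf{x}) = \hat{F}_{1,n}(T(\mathbf{x}))$, where $\hat{F}_{1,n}$ is the empirical CDF of the i.i.d.\ sample $\{T(X_i^{(1)})\}_{i=1}^{n}$, while $p_1(\mathbf{x}) = F_1(T(\mathbf{x}))$ with $F_1$ the CDF of $T \mid Y = 1$. Hence $\sup_{\mathbf{x}} \lvert \hat{p}_{1,n}(\mathbf{x}) - p_1(\mathbf{x}) \rvert \leq \sup_{t\in\R} \lvert \hat{F}_{1,n}(t) - F_1(t) \rvert$, and the DKW inequality with Massart's sharp constant bounds the probability that this Kolmogorov--Smirnov distance exceeds $\varepsilon$ by $2e^{-2n\varepsilon^2}$; taking complements yields \eqref{eq:intervali_poverenja}. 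The case $c=0$ is symmetric: $\hat{p}_{0,n}$ is the empirical survival function of $\{T(X_i^{(0)})\}_{i=1}^{n}$ evaluated at $T(\mathbf{x})$, estimating the survival function of $T \mid Y = 0$, so one applies the same bound after reflecting the sample via $T \mapsto -T$.

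The step I expect to require the most care is this reduction in property 5: it uses that the law of $T \mid Y = c$ has no atoms — a property already implicit in Theorem \ref{th:p_vrednost_je_uniformna}, since otherwise $p_c(X)$ could not be exactly $U[0,1]$ — so that ties of the form $T(X_i^{(c)}) = T(\mathbf{x})$ occur with probability zero and the one-sided indicators match genuine values of a CDF (for $c=1$) or a survival function (for $c=0$) without boundary corrections, letting the classical empirical-process estimate apply verbatim. Everything else is routine bookkeeping with i.i.d.\ averages.
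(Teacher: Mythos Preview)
Your proposal is correct and aligns closely with the paper's own argument: properties 1 and 2 via linearity and the variance formula for i.i.d.\ Bernoulli averages together with $u(1-u)\le 1/4$, property 4 deduced from almost sure convergence, and property 5 via the Dvoretzky--Kiefer--Wolfowitz inequality with Massart's constant. The one substantive difference is property 3: the paper appeals to the Glivenko--Cantelli theorem, which gives \emph{uniform} almost sure convergence of the empirical CDF (hence of $\hat{p}_{c,n}$ to $p_c$ as functions on a single probability-one event), whereas your SLLN argument yields only pointwise convergence for each fixed $\mathbf{x}$, with the exceptional null set depending on $\mathbf{x}$. Since the displayed statement \eqref{eq:skoro_sigurna_konvergencija} is written as equality of functions, the Glivenko--Cantelli route is the safer reading; your approach would need the standard monotonicity upgrade (which is precisely the Glivenko--Cantelli proof) to match it. Your explicit remark that the atomlessness of $T\mid Y=c$ --- already implicit in Theorem~\ref{th:p_vrednost_je_uniformna} --- is what makes the empirical-CDF identification clean is a point the paper leaves tacit.
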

Theorem \ref{th:dobra_svojstva_aproksimatora} shows that if we use sufficiently large $\mathcal{Z}_c$ to calculate $\hat{p}_{c,n_c}$, the calculated estimates will not differ much from the exact $p_c$-values. Hence, the estimators practically behave as uniform random variables, which means that we can use them to control the chosen error rate with high probability. That is not a problem if we have a lot of data. If we do not, the following theorem gives some guarantees about the target error rate being bound by $\alpha$ from above if $\mathcal{Z}_c$ contains more than $1/\alpha$ elements.
\begin{theorem}\label{th:ogranicenost}
	Let $n>1/\alpha$. Let $\mathcal{Z}_c = \{X_1,X_2,\ldots,X_n\}$ be a random sample of objects whose true label is $c$ and which were not used during training. Let $X$ model a random object from class $c$. Then 
	\begin{equation}\label{eq:aproksimator_je_ogranicen}
	\Prob(\hat{p}_c(X) \leq \alpha) \leq \alpha
	\end{equation}
\end{theorem}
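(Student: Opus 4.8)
The plan is to reduce the claim to a statement about the integer-valued random variable $N := n\,\hat p_c(X)$ and then to exploit the exchangeability of the scores. It suffices to treat $c=0$; the case $c=1$ follows by the same argument applied to the score $-T$, since $\one{T(X_i^{(1)})\le T(X)}=\one{-T(X_i^{(1)})\ge -T(X)}$. Because $X$ is a fresh object from class $0$, independent of $\mathcal{Z}_0$, and $T$ is a fixed function, the scores $T(X),T(X_1),\dots,T(X_n)$ are i.i.d., and $N=\bigl|\{\,i:T(X_i)\ge T(X)\,\}\bigr|\in\{0,1,\dots,n\}$, so that $\{\hat p_0(X)\le\alpha\}=\{N\le n\alpha\}=\{N\le\floor{n\alpha}\}$, the last equality because $N$ is an integer. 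I would first remove ties: if the law of $T$ given $Y=0$ has atoms, a tie between $T(X)$ and some $T(X_i)$ only increases $N$ (the estimator uses ``$\ge$''), hence only decreases $\Prob(\hat p_0(X)\le\alpha)$, so it is enough to prove the bound when $T(X),T(X_1),\dots,T(X_n)$ are a.s.\ distinct.

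Under that no-tie assumption, exchangeability of $T(X),T(X_1),\dots,T(X_n)$ makes the rank of $T(X)$ among these $n+1$ values uniform on $\{1,\dots,n+1\}$, so $N$ is uniform on $\{0,1,\dots,n\}$ and
\begin{equation}
\Prob\bigl(\hat p_0(X)\le\alpha\bigr)=\Prob\bigl(N\le\floor{n\alpha}\bigr)=\frac{\floor{n\alpha}+1}{n+1}.
\end{equation}
(The same value also drops out by conditioning on $T(X)$: given $T(X)$ one has $N\sim\mathrm{Binomial}\bigl(n,\,p_0(X)\bigr)$ with $p_0(X)\sim U[0,1]$ by Theorem~\ref{th:p_vrednost_je_uniformna}, and $\int_0^1\Prob(\mathrm{Bin}(n,u)\le k)\,du=(k+1)/(n+1)$.)

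It then remains to show $\dfrac{\floor{n\alpha}+1}{n+1}\le\alpha$, and this is the step I expect to be the main obstacle, since it is the only place in the argument where the hypothesis $n>1/\alpha$ (equivalently $n\alpha>1$, i.e.\ $1/n<\alpha$) is used, and it must be used in a way that genuinely exploits the floor: the naive estimate $\floor{n\alpha}\le n\alpha$ gives only $\frac{\floor{n\alpha}+1}{n+1}\le\frac{n\alpha+1}{n+1}$, which is not $\le\alpha$ on its own. Concretely I would write $n\alpha=\floor{n\alpha}+\{n\alpha\}$ and aim to establish $\floor{n\alpha}+1\le\alpha(n+1)$ by controlling the fractional part $\{n\alpha\}$, paying particular attention to the tight boundary cases where $n\alpha$ is (nearly) an integer, as those are the ones most likely to require the full strength of $n>1/\alpha$ (and, if needed, a continuity assumption on $T$ or a strict-inequality reading of the event). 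Once that inequality is secured, combining it with the displayed identity finishes the proof, and $c=1$ is identical by symmetry.
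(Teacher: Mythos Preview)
Your reduction is clean and your identity $\Prob(\hat p_c(X)\le\alpha)=(\floor{n\alpha}+1)/(n+1)$ in the tie-free case is correct; the paper reaches essentially the same quantity via an exchangeability/counting argument (its Lemma~\ref{le:gornje_granice_za_vece_ili_jednako}) rather than by directly invoking the uniform rank. Your instinct that the last step is ``the main obstacle'' is right, but it is worse than an obstacle: the inequality $(\floor{n\alpha}+1)/(n+1)\le\alpha$ is \emph{false} under the stated hypothesis $n>1/\alpha$. Take $n=100$, $\alpha=0.1$: then $n>1/\alpha$, yet $(\floor{n\alpha}+1)/(n+1)=11/101>0.1$. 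Neither a continuity assumption nor a strict-inequality reading rescues this, since the tie-free case is precisely the one yielding the uniform rank, and with $n=100$, $\alpha=0.105$ one still gets $\Prob(N<n\alpha)=\Prob(N\le10)=11/101>0.105$.

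This is not a defect of your route but of the statement: the paper's own derivation slips at the line $\frac{1}{n}\sum_{i=1}^{n}\Exp\bigl[\one{m(X_i)\le k}\bigr]=\frac{1}{n}\Exp[R_k-1]$, where the ``$1$'' should be $\one{m(X_0)\le k}$, and replacing the indicator by $1$ points the inequality the wrong way for an upper bound. The standard fix is to use the conformal estimator $\tilde p_c(X)=(N+1)/(n+1)$ in place of $N/n$; your exchangeability argument then gives $\Prob(\tilde p_c(X)\le\alpha)=\Prob\bigl(N\le\floor{\alpha(n+1)}-1\bigr)\le \alpha(n+1)/(n+1)=\alpha$ for every $n\ge1$, with no lower bound on $n$ needed.
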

Since $X_1,X_2,\ldots,X_n$ are also random in Theorem \ref{th:ogranicenost}, we must carefully interpret it. The theorem does not imply that \eqref{eq:aproksimator_je_ogranicen} holds for each choice of $\mathcal{Z}_c$. Instead, it says that we can keep the error rate below the desired threshold $\alpha$ if we sample a different $\mathcal{Z}_c$ every time we want to classify a new object. For example, if we have only $50$ positive elements available after training, and $5\%$ is an acceptable upper bound for $FNR$, then each time we classify a new object, we need to sample $21$ negative objects out of those $50$. Another option would be to bootstrap the $p_c$-value.

Finally, if a lot of data are available after training, we can fit the class-conditional distributions of $T$ and calculate the $p$-values as the corresponding integrals analytically. Since the scores are one-dimensional, densities should not be hard to estimate.
\section{Empirical Evaluation}\label{sec:experiments}
We experimented with the neural network from \cite{Simic2020}, designed to classify a distribution as normal or non-normal by inspecting a small sample drawn from it. In the original study, the network outperformed the standard tests of normality by many metrics. It had the overall accuracy of about $95\%$, $AUROC$ almost equal to $1$, and very high power ($1-FPR$) in detecting non-normal distributions, substantially higher than the tests. However, its $FPR$ and $FNR$ could not be controlled, which is in this experiment by applying our framework. 

First, we visualized the class-conditional densities of the network's scores in Figure \ref{fig:densities_of_T_NN}. From there, it can be seen that the assumption about higher values being more in line with the hypothesis $y(x) = 1$ holds. We calculated the $p_c$-values using large sets containing $32625$ normal and $32625$ non-normal samples (with $10, 20, \ldots, 100$ elements) as $\mathcal{Z}_1$ and $\mathcal{Z}_0$. The non-normal samples were simulated from the Pearson family of distributions by randomly selecting the first four moments. The normal samples were generated from normal distributions with randomly determined mean and standard deviation. We evaluated the performance of thus obtained neural tests of normality on another large set of $13100$ normal and $13100$ non-normal samples with $10, 20, \ldots, 100$ elements. The results for the network and standard tests of normality with $\alpha=0.01, 0.05$ are presented in Table \ref{tab:poredjenje_D}. Note that for each $\alpha$ we were able to obtain two neural tests of normality from the original network: $NN_0 \equiv \one{p_0(x)\leq \alpha}$ with $FPR \leq \alpha$, and $NN_1 \equiv \one{p_1(x)\leq \alpha}$ with $FNR \leq \alpha$. The standard statistical tests cannot control $FPR$, so this is a clear advantage of the neural test $NN_0$.

Additionally, we plotted the values of $\alpha$ against empirical $FPR$ of $NN_0$ and $FNR$ in Figures \ref{fig:NN_p0_FPR} and \ref{fig:NN_p1_FNR}. We see that choice of $\alpha$ effectively sets the target error rate to $\alpha$. Moreover, the probability to correctly classify samples of class $1-c$ increases with the sample size for $NN_c$, which is illustrated in Figures \ref{fig:NN} and \ref{fig:NNN} for $\alpha = 0.05$.
\setlength{\tabcolsep}{3pt}
\begin{small}
\begin{table}
	\centering
	\caption{$NN$ is the original neural network from \cite{Simic2020}, $NN_c$ is the neural test derived from it by bounding $FNR$ ($c=1$) or $FPR$ ($c=0$) by $\alpha$ from above.  SW, LF, JB, and AD stand for the Shapiro-Wilk, Lilliefors, Jarque-Berra, and Anderson-Darling tests of normality, respectively, and $A$ for accuracy.}
\label{tab:poredjenje_D}
	\begin{tabular}{llllllllllllll}
		& AD      &         & $NN_0$  &         & $NN_1$  &         & JB      &         & LF      &         & SW      &         & NN      \\ \toprule
		$\alpha$ & $0.01$  & $0.05$  & $0.01$  & $0.05$  & $0.01$  & $0.05$  & $0.01$  & $0.05$  & $0.01$  & $0.05$  & $0.01$  & $0.05$  &         \\ 
		$A$      & $0.804$ & $0.827$ & $0.896$ & $0.912$ & $0.840$ & $0.890$ & $0.813$ & $0.831$ & $0.760$ & $0.794$ & $0.821$ & $0.840$ & $0.911$ \\ 
		$FNR$    & $0.009$ & $0.049$ & $0.198$ & $0.104$ & $0.010$ & $0.049$ & $0.015$ & $0.032$ & $0.010$ & $0.046$ & $0.009$ & $0.049$ & $0.128$ \\
		$FPR$    & $0.382$ & $0.296$ & $0.010$ & $0.053$ & $0.310$ & $0.171$ & $0.359$ & $0.305$ & $0.470$ & $0.367$ & $0.349$ & $0.271$ & $0.05$  \\ \bottomrule
		\end{tabular}
\end{table}
\end{small}
\begin{figure}
	\centering
	\subfloat[Class conditional densities of $T$\label{fig:densities_of_T_NN}]{\includegraphics[width = 1.8in]{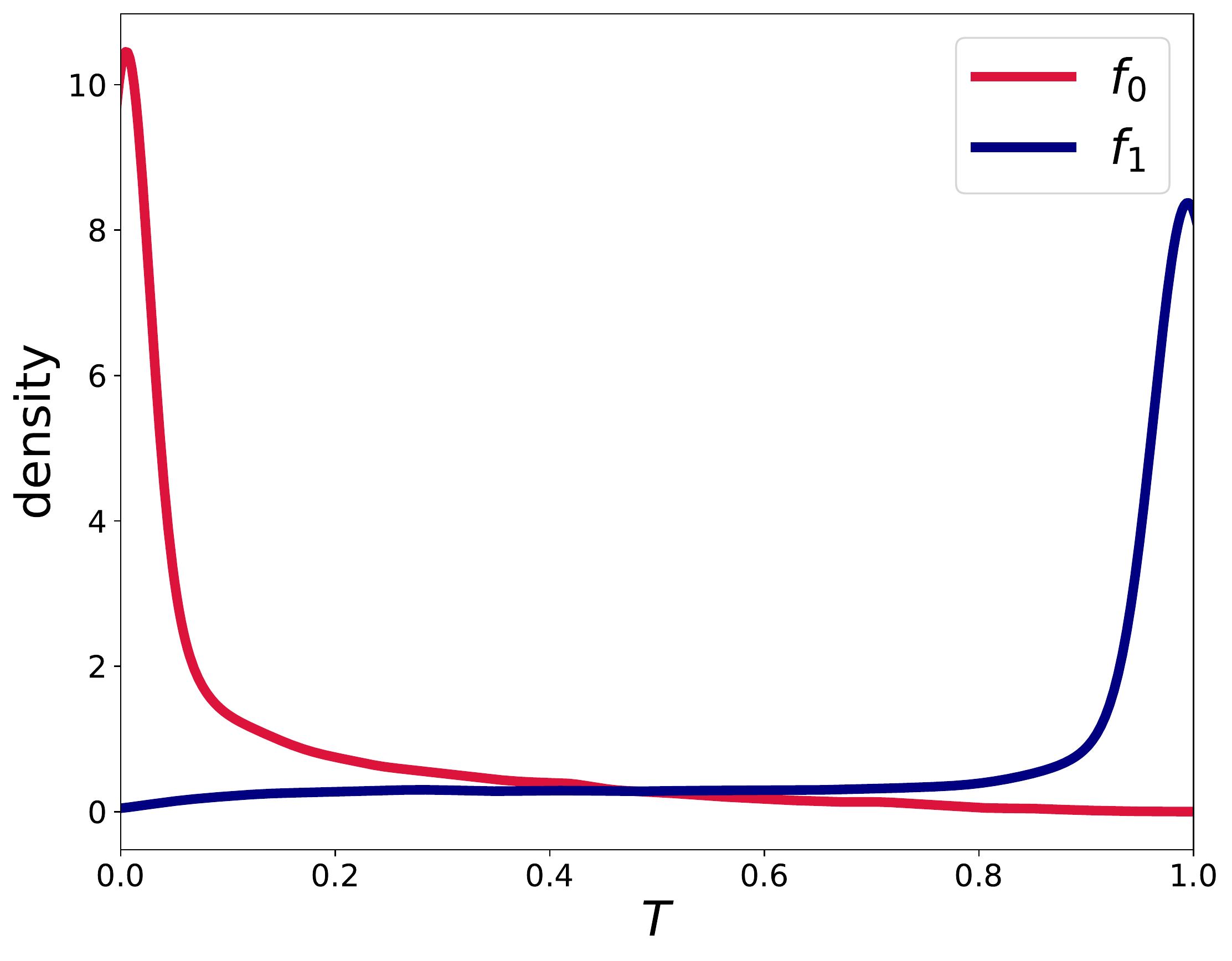}} 
	\subfloat[$NN_0$ controls $FPR$\label{fig:NN_p0_FPR}]{\includegraphics[width = 1.8in]{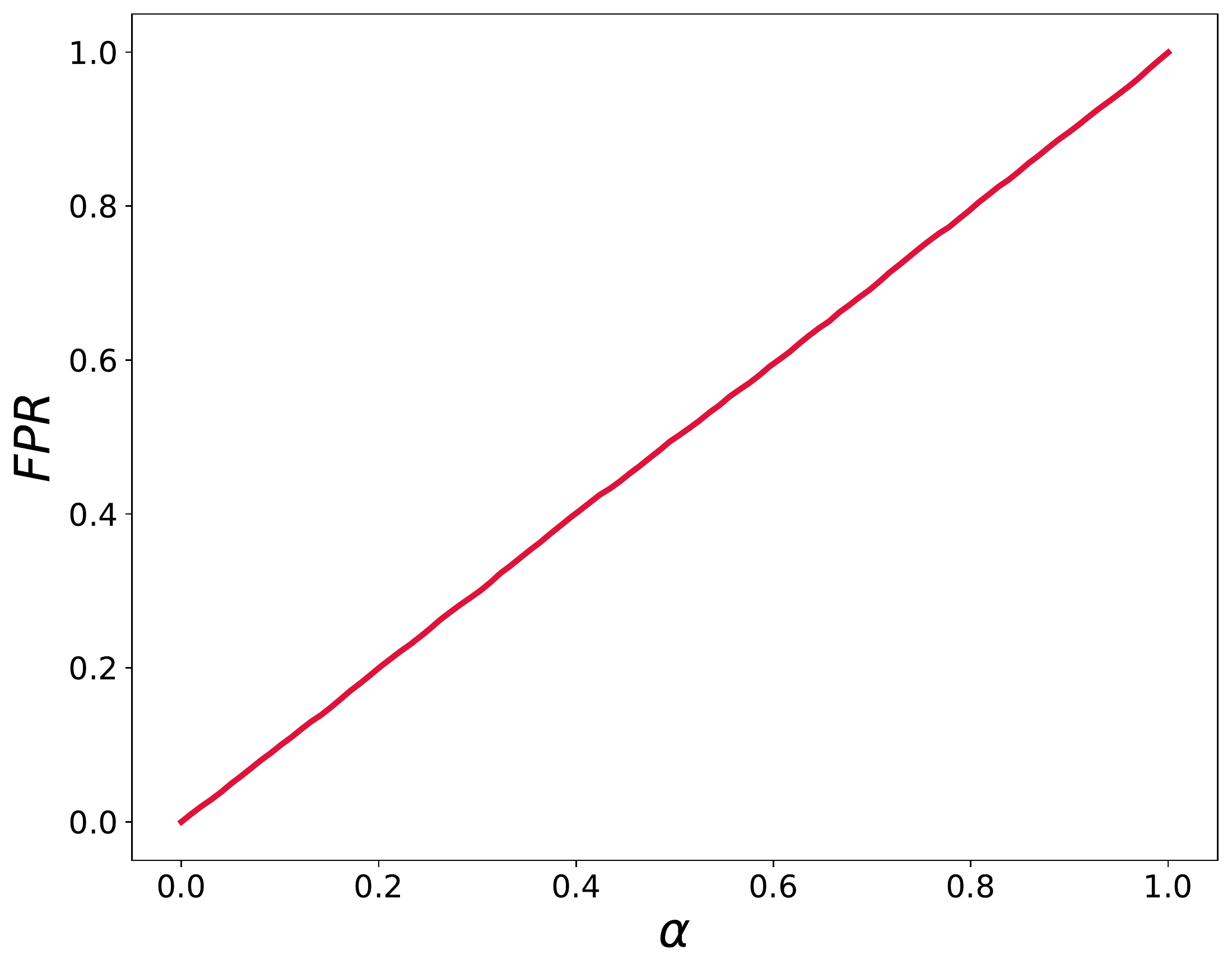}}  
	\subfloat[$NN_1$ controls $FNR$\label{fig:NN_p1_FNR}]{\includegraphics[width = 1.8in]{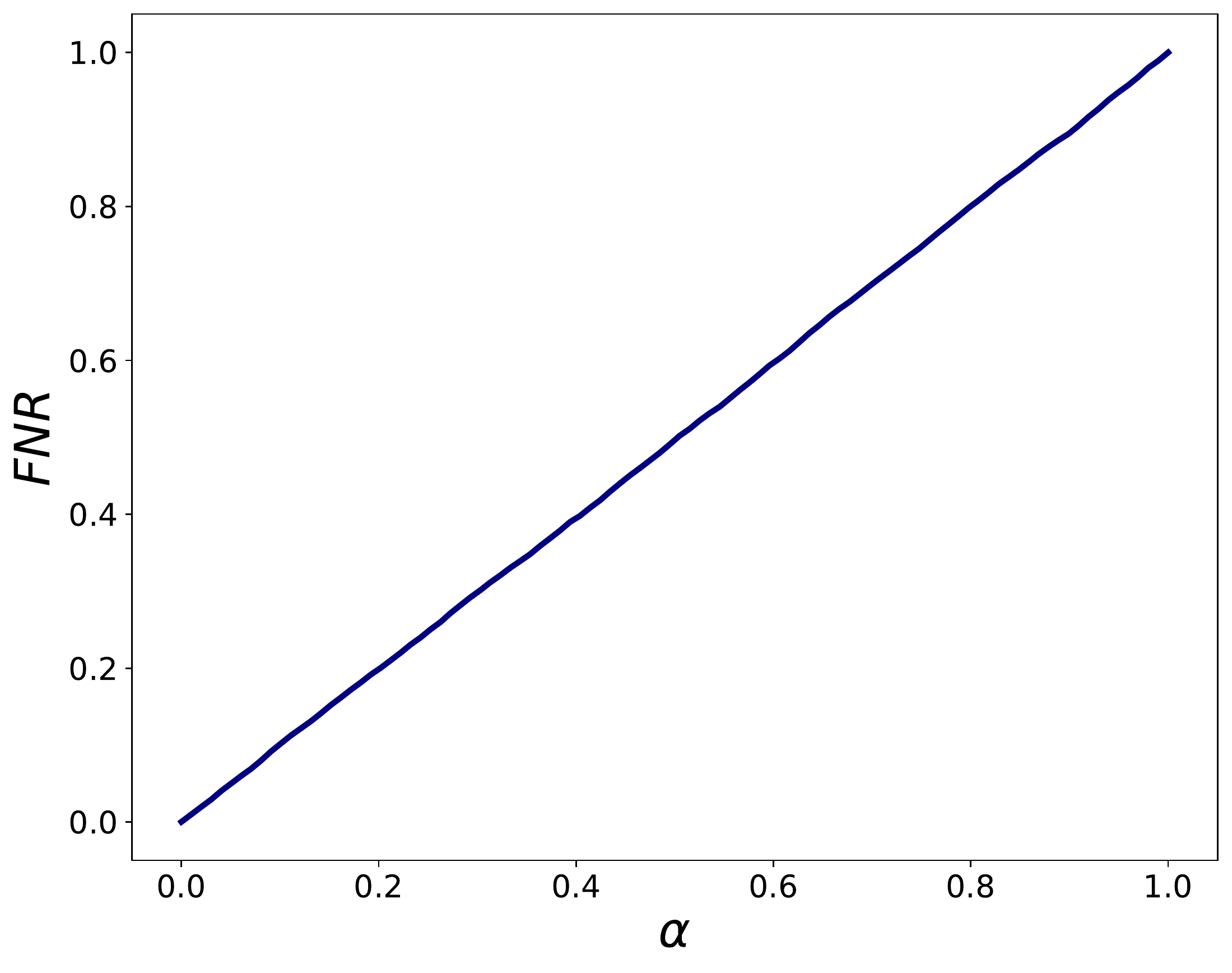}} \\
	\subfloat[$TPR$ of $NN_0$ grows with $n$\label{fig:NN}]{\includegraphics[width = 1.8in]{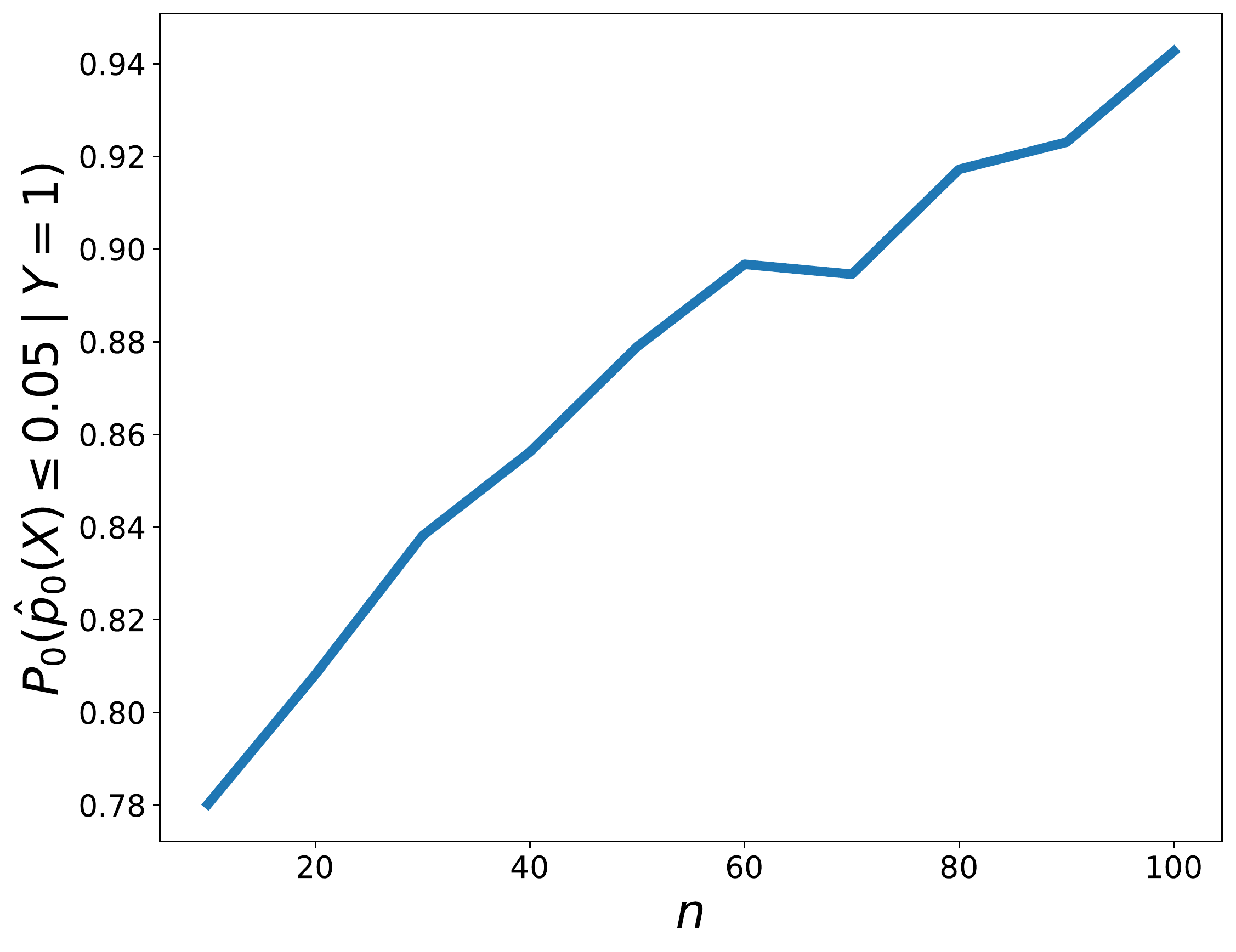}} 
	\subfloat[$TNR$ of $NN_1$ grows with $n$\label{fig:NNN}]{\includegraphics[width = 1.8in]{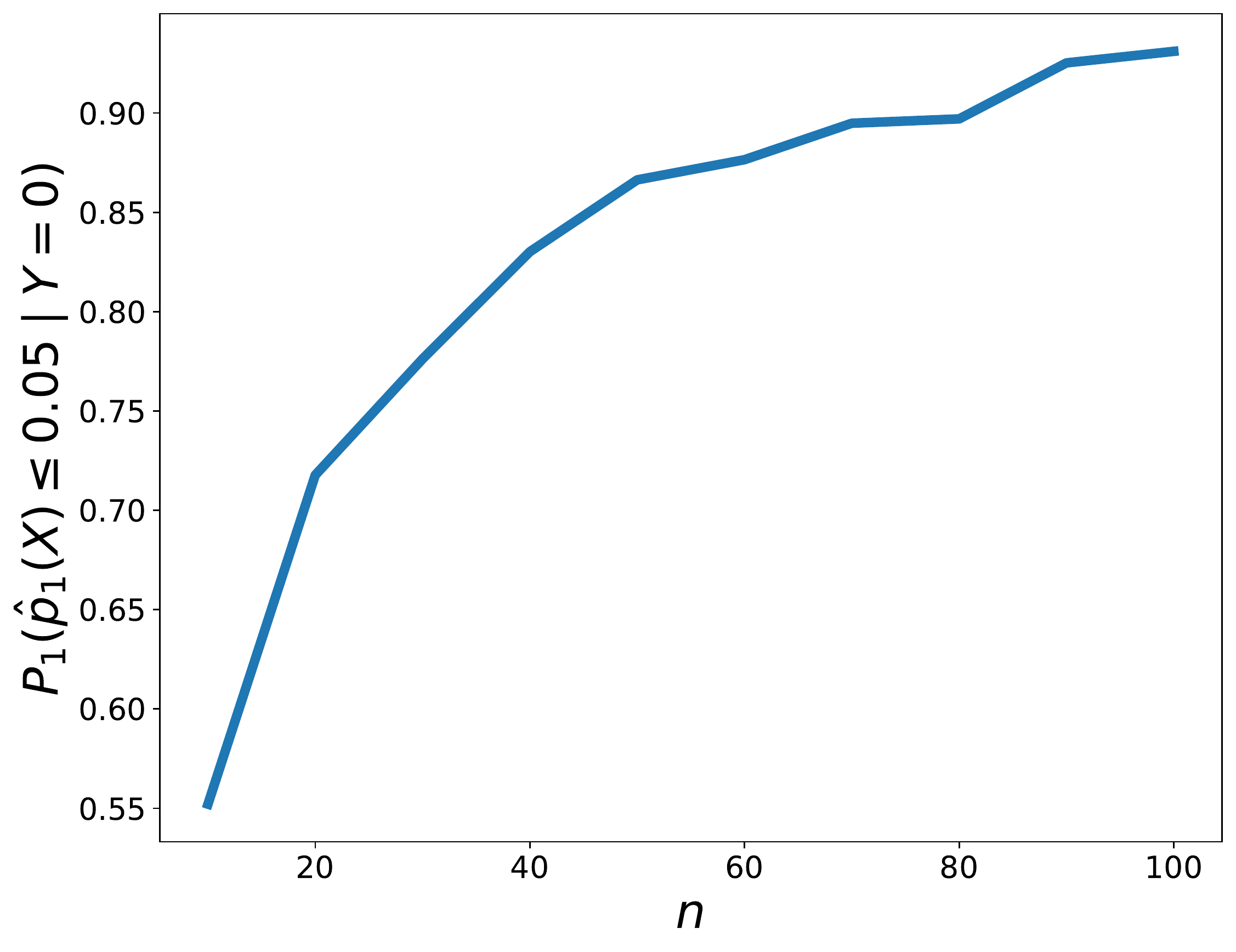}} 
	\caption{Analysis of $NN_0$ and $NN_1$, the neural tests of normality. $n$ is the sample size. $TNR=1-FPR$ is the true negative rate, and $TPR=1-FNR$ is the true positive rate.}
	\label{fig:NN_results}
\end{figure}

Finally, we evaluated the tests on four groups of distributions that are commonly used in statistical literature to estimate the power of a normality test. The distributions are summarized in Table \ref{tab:alternative_distributions} and the results are presented in Figure \ref{fig:tnr_po_grupama}. We see that $NN_1$ is the best among the $FNR$-controlled tests, but that its power in group $G_4$ for the samples with more than $50$ elements is unstable and drops for the samples with more than $70$ elements. However, $NN_1$ performed better than standard statistical tests in other groups and for the samples in $G_4$ with $50$ elements at most. The results confirm that $FPR$ of $NN_0$ can be controlled, the only exception being the samples with $90$ and $100$ elements in group $G_4$. This is due to the fact that the distributions in group $G_4$ are different from those used during training. Therefore, to fully incorporate the neural tests of normality in everyday use of statistics, a bigger and more diverse training dataset is needed, and potentially, the networks should be deeper, as the one used here has only two hidden layers. Still, it managed to surpass all the standard tests of normality. We used small samples only because the standard tests are sufficiently powerful when the samples are large enough.

\begin{table}
	\centering
	\caption{Specification of the non-normal distributions in dataset $\mathcal{C}$. $\Gamma$ -- Gamma distribution, $Exp$ -- exponential, $LN$ -- lognormal, $W$ -- Weibull, $U$ -- uniform, $B$ -- Beta.}
	\label{tab:alternative_distributions}
	\begin{tabular}{llll}
		\noalign{\smallskip}\hline\noalign{\smallskip}
		Group &  Support & Shape & Distributions\\
		\noalign{\smallskip}\hline\noalign{\smallskip}
		$G_1$ & $(-\infty, \infty)$ & Symmetric & $t(1)$, $t(3)$, standard logistic, standard Laplace \\
		\midrule
		$G_2$ & $(-\infty, \infty)$ & Asymmetric & Gumbel($0, 1$), Gumbel($0, 2$), Gumbel($0, 1/2$) \\
		\midrule
		$G_3$ & $(0,\infty)$ & Various  &  $Exp(1)$, $\Gamma(1, 2)$, $\Gamma(1, 1/2)$, $LN(0, 1)$, $LN(0, 2)$ \\
		\multicolumn{3}{l}{ } & $LN(0, 1/2)$, $W(1, 1/2)$, $W(1, 2)$ \\
		\midrule
		$G_4$ & $(0, 1)$ & Various & $U[0, 1]$, $B(2, 2)$, $B(1/2,1/2)$, $B(3,3/2)$, $B(2,1)$ \\
		\noalign{\smallskip}\hline
	\end{tabular}
\end{table}

\begin{figure}
	\subfloat[$G_1$]{\includegraphics[width = 2.5in]{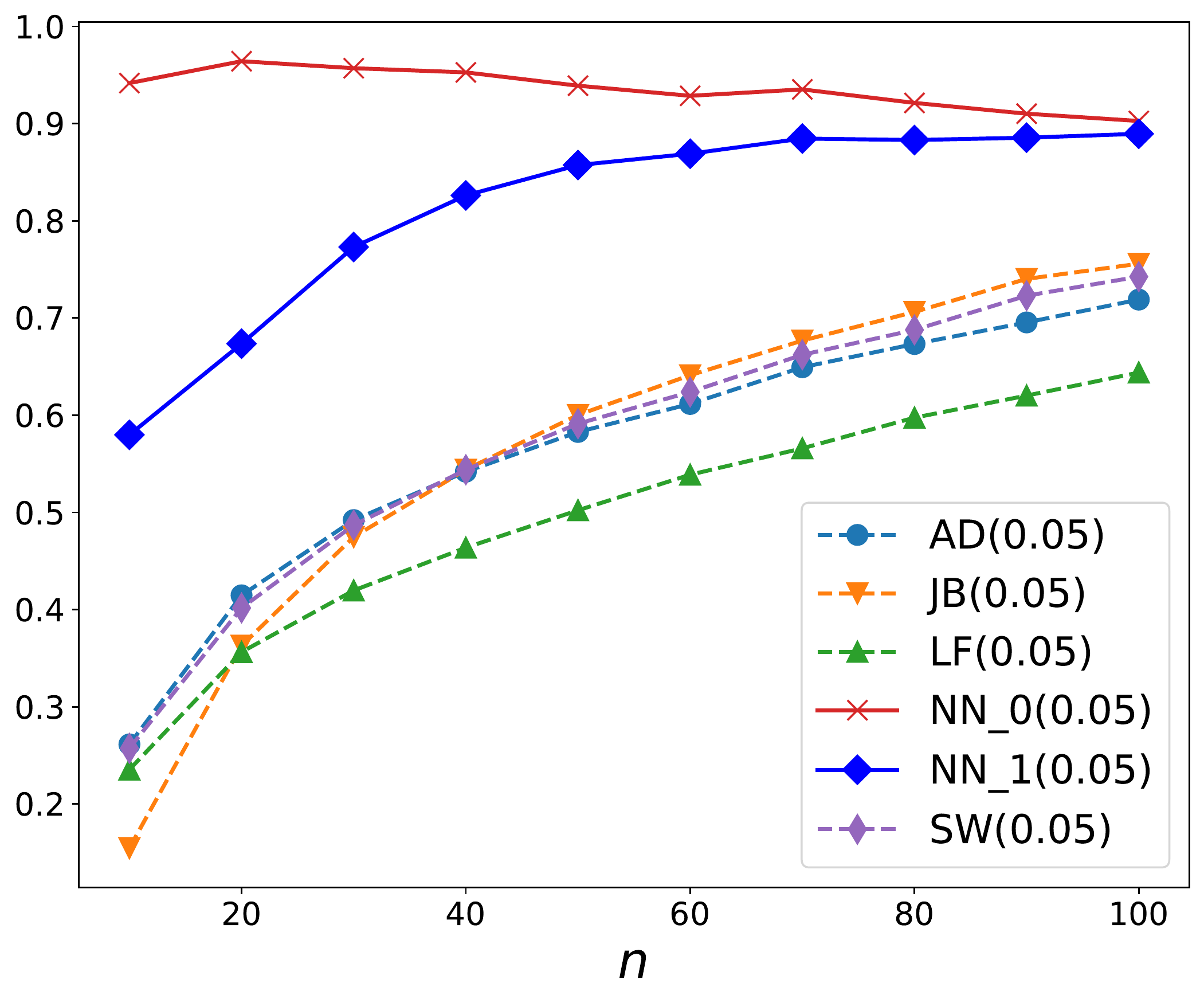}} 
	\subfloat[$G_2$]{\includegraphics[width = 2.5in]{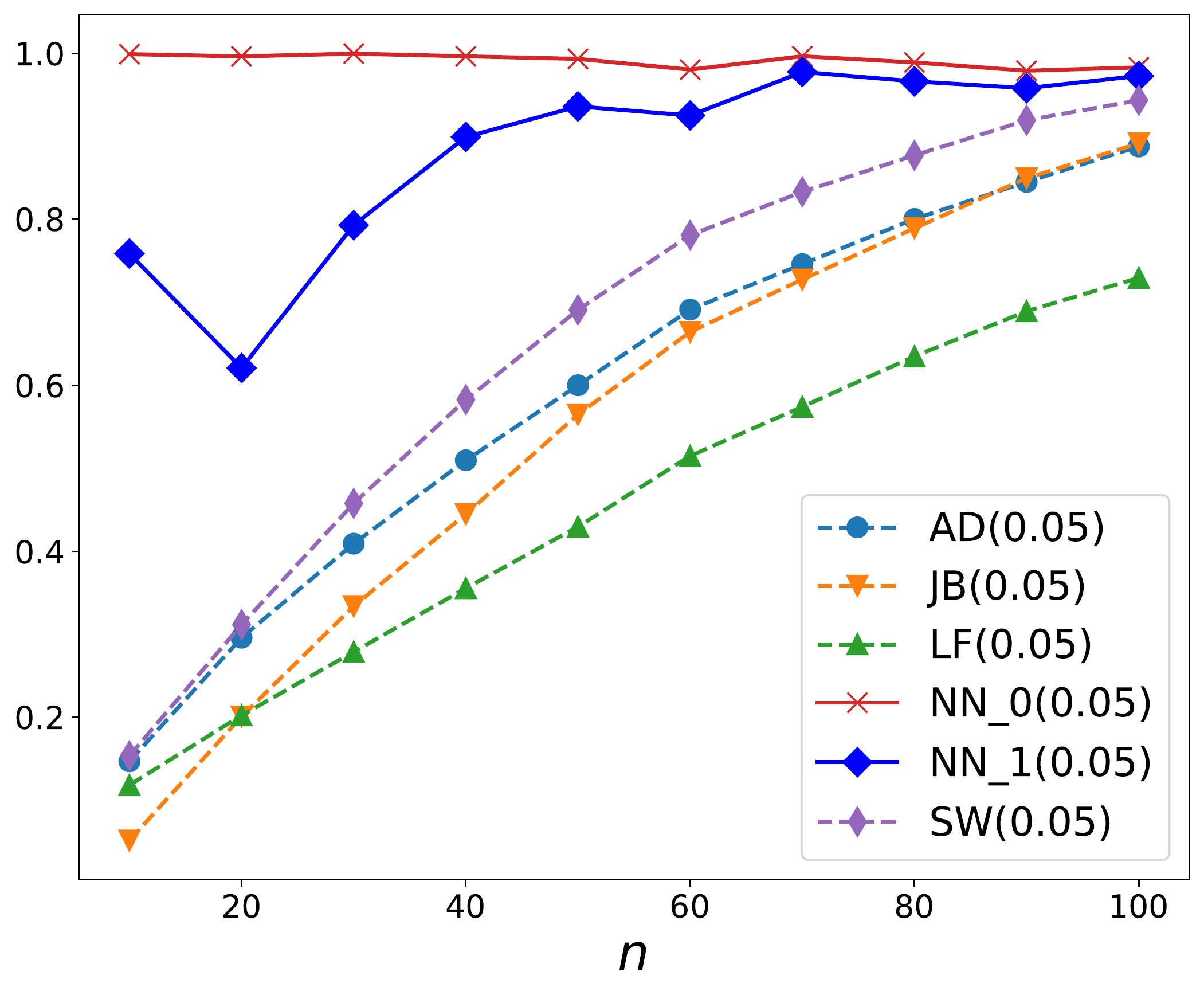}} \\
	\subfloat[$G_3$]{\includegraphics[width = 2.5in]{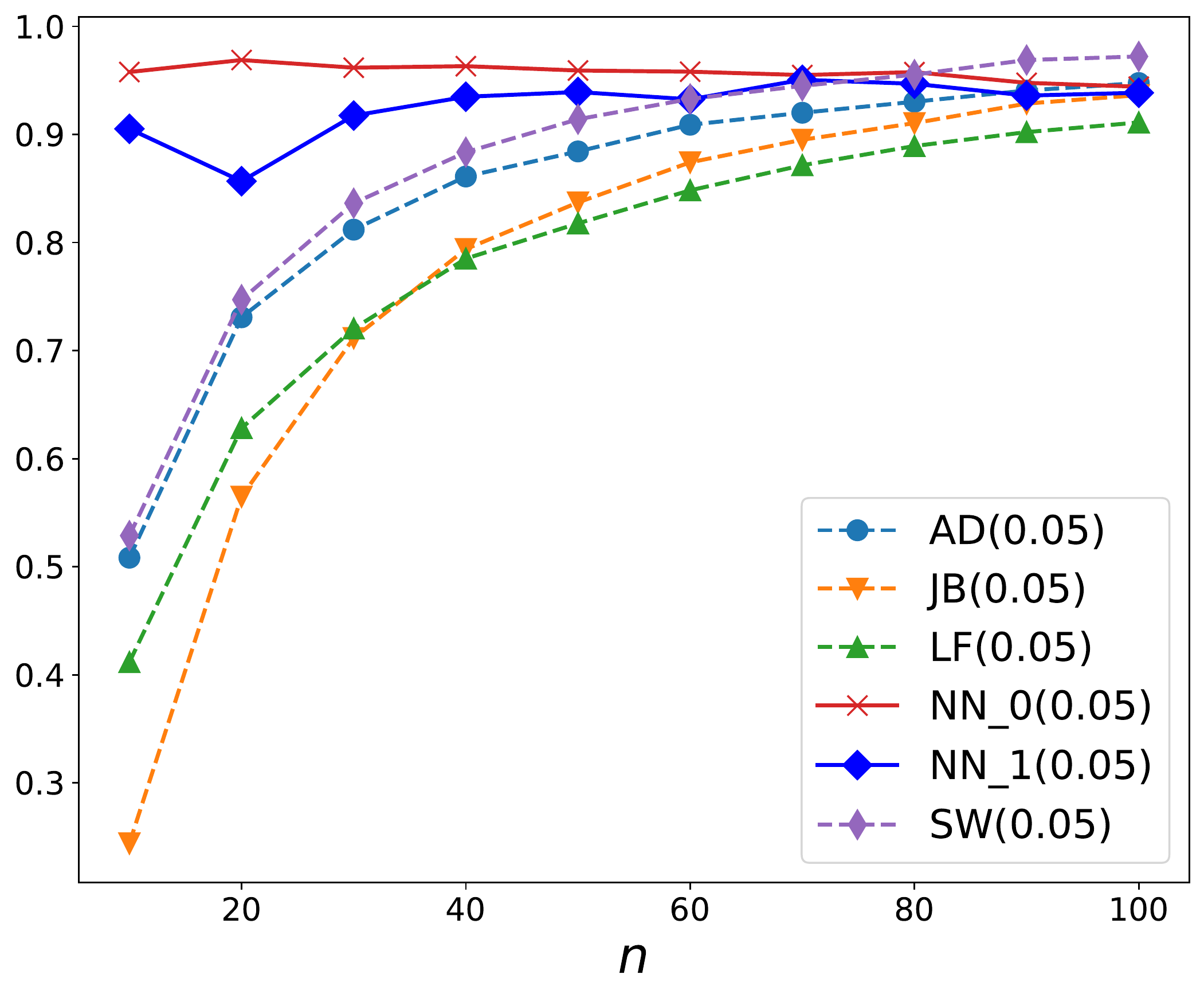}} 
	\subfloat[$G_4$]{\includegraphics[width = 2.5in]{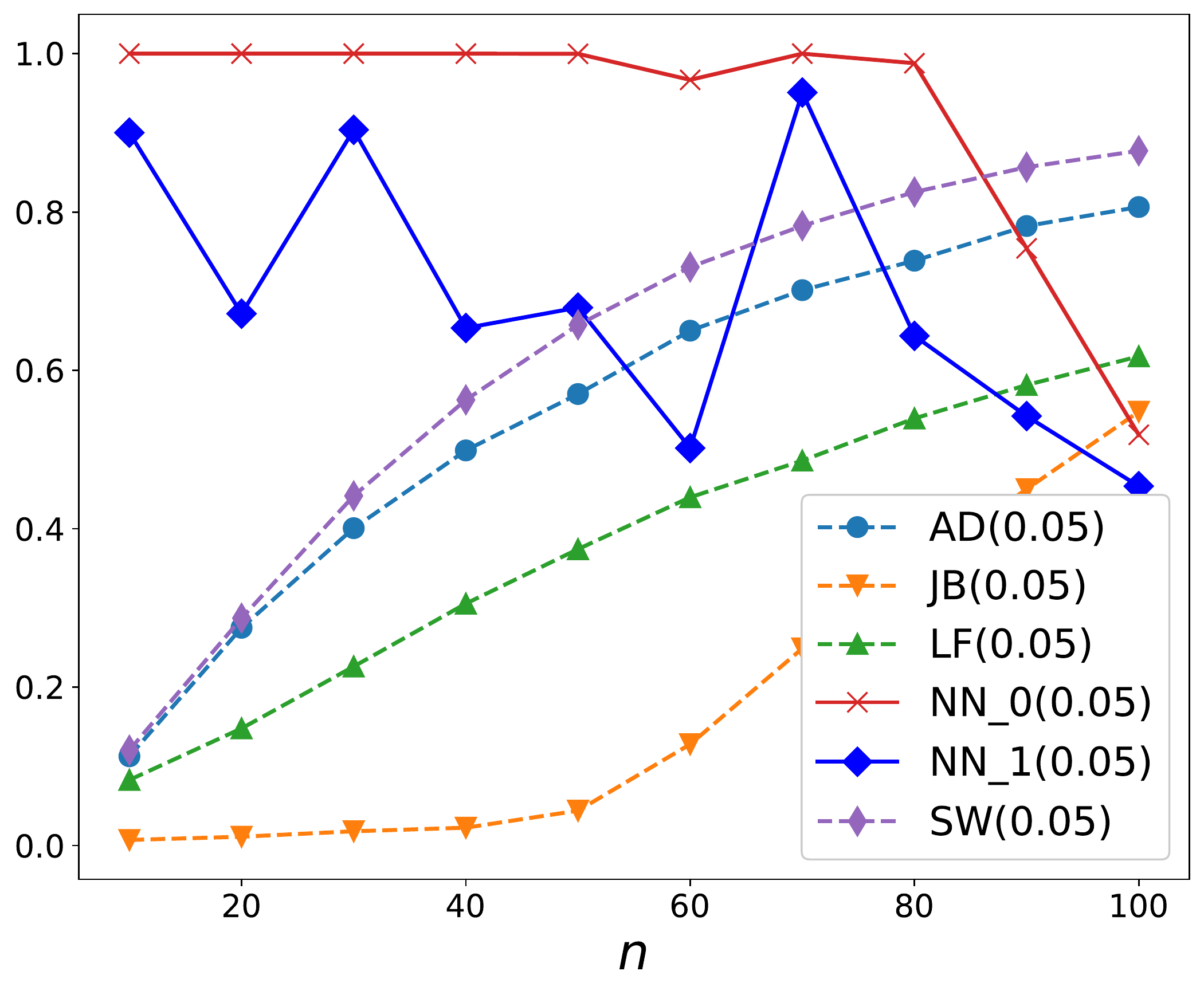}}  
	\caption{The power ($1-FPR$) of $NN_1$ and the standard tests of normality on different groups of non-normal distributions from set $\mathcal{C}$. The $FNR$ limit was set to $\alpha=0.1$. Note that for $NN_0$, $\alpha$ controls $FPR$, not $FNR$, and the plotted values should be close to $1-FPR$ for each group and sample size.}
	\label{fig:tnr_po_grupama}
\end{figure} 

\section{Discussion and Conclusions}\label{sec:discussion_and_conclusion}
In this study, we present a general framework to turn a binary classifier into two statistical tests whose class-conditional error rates can be kept at the desired levels. The derived tests use classification $p$-values to both control the error rates and quantify statistical uncertainty of their decisions. The only assumption is that the classifier at hand has a scoring function whose higher scores are more common for the objects of the positive class, while negative objects tend to have lower scores. It can also be the other way around, as long as the scores of the objects belonging to the different classes tend to group together at the opposite ends of the score range. The formulated framework was successfully applied to the neural networks \cite{Simic2020} that were developed to classify distributions as normal or non-normal by inspecting small samples from it. The neural tests of normality that we derived proved capable of keeping the error rates at chosen levels, empirically verifying the framework's theory. The theorems in this paper, except for Theorem \ref{th:ogranicenost}, are not new. They are collections of mathematical results that were already there, but which we formulated in the context of binary classification. This should not come as a surprise, since the theory of statistical hypothesis testing is well-established, and once we see how it is connected to binary classification, its theoretical results are ready to use in the new context.

The proposed framework has certain advantages over similar methods that have been proposed in literature. It can be applied to the classifiers which have already been trained and are already in use. No changes to the process of training are needed. If the scores of the held-out objects used to calculate the $p$-values are sorted prior to use, calculation will be of logarithmic complexity. To calculate $p$-values, no additional classifiers have to be trained. Furthermore, by working with one-dimensional scores, the framework avoids density estimation in high-dimensional spaces. The cost-sensitive classification, empirical and plug-in Neyman-Pearson approaches, cross-validates $p$-values, and typicality indices have at least one of the mentioned shortcomings.

On the other hand, a drawback of our framework is that it requires a separate dataset to compute the $p_c$-values, which leaves less data for training and cross-validation. Theorem \ref{th:ogranicenost}, however, shows that if we want to have $FPR$ or $FNR$ lower than $\alpha\in(0, 1)$ in the long run, it is sufficient to sample $\lceil1/\alpha\rceil$ scores from the corresponding class whenever a new object is to be classified. Another shortcoming is that $p$-values are usually misinterpreted \cite{Sellke2001,Berger2003,Goodman2008,Nuzzo2014,Fraser2016,Demidenko2016,Wasserstein2016,Wasserstein2019}, so special care must be given to interpret them correctly. Finally, there are no guarantees that, even though one error rate is controlled, the other is this way minimized or kep close to the minimal possible.

The framework that we proposed here is universal. It can be applied to any binary  classification problem where one of the errors is more severe and we want to limit its probability (i.e. its long-run frequency). That is an important requirement in many cases, especially in applications of machine-learning classifiers in medicine. Even though there are still some issues to resolve and directions to investigate, we believe that this framework may bring benefit and prove interesting to machine-learning researches, statisticians and the end users of binary classifiers.

Some ideas for future research are:
\begin{itemize}
	\item Combining classifiers with different schools of hypothesis testing to obtain new schools of classification, that have different assumptions, methods, and ways to interpret the results.
	\item Generalizing the framework to classification with multiple classes.
	\item Proving new theoretical results. For example, how does breaking the assumption that higher scores are more in line with class $1$ affects the capability of the framework to control $FPR$ and $FNR$, and so on.
\end{itemize}

\newpage
\section*{Broader Impact}

As a general methodology that can control the long-run frequency of the selected error in binary classification, the framework proposed in this study can be applied in all the fields of science and industry. The typical use case would be the one in which there are many decisions to be made between two alternatives, and the end goal is to make sure that one of the alternatives is \textbf{not} incorrectly chosen in more than the predefined percentage of time (in the long run). This scenario encompasses both hypothesis testing and classification, as, broadly speaking, a decision can be understood as choosing one hypothesis over another, or assigning one label over the other. The examples are: medicine, in which we want classifiers that do not miss a condition in more than, say, $1\%$ or $2\%$ of the cases, or financial industry, where an automated recommendation system that is guaranteed to be wrong only $1\%$ of the time when giving a recommendation to invest would be highly appreciated by the banks and investment funds.

As for the neural tests, many statistical techniques require that the data be normal in order to give valid inference. The neural tests that we developed in this study by applying the framework performed better than the standard statistical tests of normality. Having a more accurate normality test would be of great importance to statisticians and scientists who test for normality on a daily basis to choose the appropriate way to analyze their data, or verify that the residuals of a regression model are normally distributed in order to confirm that the model is unbiased. Better accuracy of such a fundamental tool in science would have an impact on society in general, as any advance in scientific methodology is an opportunity for technological as well.

A potential drawback of our framework is that the classification $p_c$-values, that are central to it, could be misinterpreted by the end users. In standard statistics, many logical fallacies with $p$-values have appeared over time, mainly because the definition of the $p$-values can be difficult to grasp intuitively. We tried to give intuitive explanations and motivation in addition to mathematical details to make sure that the classification $p_c$-values do not get misinterpreted. One must always have their frequentist nature in mind: the $p_c$-values are the long-run frequencies of the corresponding errors, and by using $\alpha\in(0, 1)$ as the decision threshold, we can make sure that, \textbf{in the long run}, we will not be committing the target error more than $100\alpha\%$ of time.

If the data sets reserved for calculating the classification $p_c$-values are not random and are not representative of the general population of objects on which we want to control the target error's frequency, it may not behave as we want to in the long run. For example, the sub-populations that are extremely under-represented in the datasets used, or not represented at all, will have an unpredictable error rate. However, if a dataset is not sufficiently large nor representative, any statistical model's capacities for valid inference would be limited for the dataset in question.

\newpage
\bibliography{references}

\newpage
\appendix
\section{Proofs}
\subsection{Proof of Theorem \ref{th:p_vrednost_je_uniformna}}
For $y(X)=c=1$, $p_c(X)$ is the CDF of $T(X) \mid y(X)=1$. By the probability integral transform \citep{Angus1994}, $p_1(X)$ is uniform over $[0, 1]$. For $y(X)=c=0$, $p_0(X)$ is the difference between $1$ and the CDF of  $T(X) \mid y(X)=0$, so it is uniform over $[0, 1]$.
\subsection{Proof of Theorem \ref{th:dobra_svojstva_aproksimatora}}
\begin{enumerate}
	\item This property is a direct consequence of the linearity of expectation.
	\item Property \eqref{eq:varijabilnost}  follows from  $V[aZ_1+bZ_2]=a^2V[Z_1]+b^2V[Z_2]$ for independent variables $Z_1$ and $Z_2$, the fact that the indicator variables $\bm{1}\left\{T(X_i)\,\rho\,T(x)\right\}$ ($\rho \in \{\leq, \geq\}$) are i.i.d, and because the maximum of $f(u) = u(1-u)$ ($f:[0,1] \rightarrow [0, 1]$) is $1/4$.
	\item Almost sure convergence was proven by the Glivenko--Cantelli theorem (see \cite{Tucker1959}).
	\item By Fatou's Lemma \cite{Simons1995}, consistency follows from almost sure convergence.
	\item Formula \eqref{eq:intervali_poverenja} is a direct consequence of the Dvoretzky--Kiefer--Wolfowitz inequality \cite{Dvoretzky1956,Massart1990}. 
\end{enumerate}
\subsection{Proof of Theorem \ref{th:ogranicenost}}
We will prove the theorem for $c=1$ as the proof for $c=0$ is completely analogous. To do so, we will need Lemma \ref{le:gornje_granice_za_vece_ili_jednako}.
\begin{lemma}\label{le:gornje_granice_za_vece_ili_jednako}
	In any set with $n$ numbers, for each $k=0,1,\ldots,n-1$  there can be at most $k+1$ numbers in the set that are greater than or equal to at most $k$ other numbers in the set. Formally, for each set $A=\{a_i\}_{i=1}^{n}$, $a_i\in A$ and $m(a_i)=\lvert\left\{j \in \{1,2,\ldots,n\}\setminus\{i\} \colon a_j \leq a_i\right\}\rvert$ the following holds for each $k=0,1,\ldots,n-1$:
	\begin{equation}
	r_k=\left\lvert\left\{a_i \in A \colon m(a_i) \leq k\right\}\right\rvert \leq k
	\end{equation} 
\end{lemma}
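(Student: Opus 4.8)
\medskip
\noindent\emph{Proof sketch.}
The plan is to read the bound off the order statistics of $A$, with no need for induction. First I would relabel so that $a_{(1)} \le a_{(2)} \le \cdots \le a_{(n)}$, breaking ties by the original index, and for each $i$ write $\rho(i) \in \{1,\dots,n\}$ for the position of $a_i$ in this ordering. The whole argument then rests on one observation: $m(a_i) \ge \rho(i)-1$ for every $i$. Indeed, the $\rho(i)-1$ elements occupying positions $1,\dots,\rho(i)-1$ all have value at most $a_i$ and all carry an index different from $i$, so each of them is one of the elements counted by $m(a_i)$.

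Granting this, suppose $m(a_i) \le k$. Then $\rho(i)-1 \le k$, i.e.\ $\rho(i) \le k+1$, so $a_i$ is one of the $k+1$ elements of smallest rank. Hence $\{a_i \in A : m(a_i) \le k\}$ is contained in a set of cardinality $k+1$, which gives $r_k \le k+1$ (the bound stated informally in the lemma); equality already appears when the $a_i$ are distinct, since then $m(a_{(j)}) = j-1$ exactly. This combinatorial estimate is precisely what the proof of Theorem~\ref{th:ogranicenost} will feed on, after enlarging the sample to the $n+1$ i.i.d.\ scores $\{T(X), T(X_1), \dots, T(X_n)\}$ and exploiting their exchangeability to convert the count $r_k$ into a probability.

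I do not expect a genuine difficulty here. The only step needing a moment's attention is the presence of repeated values: with ties $m(a_i)$ may exceed $\rho(i)-1$, because a tied element placed at a later position still contributes, but this only reinforces the inequality $m(a_i) \ge \rho(i)-1$, so it holds verbatim and no separate tie analysis is required. An alternative proof by induction on $n$ — delete a maximal element and track the effect on the remaining $m$-values — would also go through, but it carries more bookkeeping than the direct ranking argument, so I would present the latter.
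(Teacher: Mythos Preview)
Your argument is correct and coincides with the paper's own proof: both sort the elements, observe $m(a_{(i)}) \ge i-1$, and conclude that any $a_i$ with $m(a_i)\le k$ must lie among the first $k+1$ positions, giving $r_k\le k+1$. Your handling of ties is a bit more explicit, and you rightly flag that the displayed bound $r_k\le k$ in the lemma is a typo for $r_k\le k+1$, which is indeed what the paper proves and later uses.
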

\begin{proof}
	Sort $A$ in a non-decreasing array
	\begin{equation}\label{eq:nerastuci_niz}
	a_1^{(s)} \leq a_2^{(s)} \leq \ldots \leq a_i^{(s)} \leq \ldots a_n^{(s)}
	\end{equation}
	Obviously
	\begin{equation}\label{eq:donja_granica_m_ai}
	m(a_i^{(s)}) \geq i - 1 \iff i \leq m(a_i^{(s)}) + 1
	\end{equation}
	for each $i=1,2,\ldots, n$ because there is at least $i-1$ other numbers in $A$ that are not greater than $a_i^{(s)}$. 
	
	Let $k$ be any number from $\{0,1,2,\ldots,n-1\}$.  For each $a \in A$ for which $m(a) \leq k$, from \eqref{eq:donja_granica_m_ai} it holds that $a$ must come before position $k+1$, including it as well, in the sorted array \eqref{eq:nerastuci_niz}. Let $r_k$ be the number of such numbers. Suppose $r_k>k+1$. Then, at least one of those numbers, denoted as $a'$, would have to come after position $k+1$ in the sorted array. It would then hold that $m(a') \geq k + 1$, which is a contradiction. Hence, it must be that $r_k \leq k + 1$.
\end{proof}
We now prove the main result. The proof is an adaptation of the proof for the $p$-values from \cite{Dumbgen2008} being stochastically larger than $U[0, 1]$.

	Let $X_0=X$. Let's first note that Lemma \ref{le:gornje_granice_za_vece_ili_jednako} holds for each realization of the variables $T(X_0),T(X_1), \ldots, T(X_n)$. So, we conclude that the Lemma applies to the variable $R_k=\lvert\{i \in {0,1,\ldots,n} : m(X_i) \leq k\}\rvert \leq k$. 
	Note that $\{T(X_i)\}_{i=0}^{n}$ are i.i.d., as well as $\{m(X_i)\}_{i=0}^{n}$.
	
	Also, note that $\hat{p}_1(X)=\frac{1}{n}m(X_0)$. Let $k=\floor{\alpha n} = \alpha n - \delta\leq \alpha n$ ($\delta \in [0, 1)$). Then
	\begin{align*}
	\Prob\{\hat{p}_c(X) \leq \alpha\} &= \Prob\{m(X_0) \leq \alpha n\} \\
	&= \Prob\{m(X_0) \leq k\} \tag{since $m(X_0)$ is discrete}\\
	&=\frac{1}{n}n\Prob\{m(X_0) \leq k\}\\
	&=\frac{1}{n}\sum_{i=1}^{n}\Prob\{m(X_i) \leq k\} \tag{$m(X_0),\ldots, m(X_n)$ are i.i.d.}\\
	&=\frac{1}{n}\sum_{i=1}^{n}\Exp\left[\bm{1}\left\{m(X_i)\leq k\right\}\right]\\
	&=\frac{1}{n}\Exp\left[R_k-1\right]\\
	&\leq \frac{1}{n}(k+1-1) \\
	&=\frac{1}{n}(\alpha n - \delta)\\
	&=\alpha - \frac{\delta}{n} \\
	&\leq \alpha \tag{$\delta \geq 0$} \\
	\end{align*}
	which we wanted to prove.
\end{document}